\newtheorem{definition}{Definition}
\newtheorem{theorem}{Theorem}
\newtheorem{lemma}[theorem]{Lemma}
\newtheorem{proposition}[theorem]{Proposition}
\newtheorem{remark}{Remark}
\theoremstyle{definition}
\newtheorem{example}{Example}
\newcommand{\negation}[1]{\operatorname{neg}\left( #1 \right)}
\newcommand{\expect}[1]{\mathbb{E}\left[ #1 \right]}
\newcommand{\foralls}[0]{\forall^*}
\newcommand\xqed[1]{%
  \leavevmode\unskip\penalty9999 \hbox{}\nobreak\hfill
  \quad\hbox{#1}}
\newcommand\demo{\xqed{$\triangle$}}
\begin{document}
%
\title{Quantified Markov Logic Networks\thanks{\emph{Authors appear in strict  alphabetical order. \smallskip}}}%
\author{V\'ictor Guti\'errez-Basulto \\ Cardiff University, UK \\ gutierrezbasultov@cardiff.ac.uk \\ 
\And Jean Christoph Jung \\ University of Bremen, Germany \\ KU Leuven, Belgium\\ jeanjung@uni-bremen.de\\
\And Ond\v{r}ej Ku\v{z}elka \\ KU Leuven, Belgium \\ ondrej.kuzelka@kuleuven.be
}
\maketitle

\begin{abstract}
Markov Logic Networks (MLNs) are well-suited for expressing statistics such as {\it ``with high probability a smoker knows another smoker''} but not for expressing statements such as {\it ``there is a smoker who knows most other smokers''}, which is necessary for modeling, e.g.\ influencers in social networks.
To overcome this shortcoming, we study \emph{quantified MLNs} which generalize MLNs by introducing {\em statistical universal quantifiers}, allowing to express also the latter type of statistics in a principled way.
Our main technical contribution is to show that the standard reasoning tasks in quantified MLNs, maximum a posteriori and marginal inference, can be reduced to their respective MLN counterparts in polynomial time.
\end{abstract}

\section{Introduction}

Markov Logic Networks~\cite{RichardsonD06} extend  first-order logic (FOL) with means to capture uncertainty. This is intuitively achieved by softening the meaning of FOL formulas by associating a weight to them, such that the higher the weight, the higher the probability of the formula to be satisfied. Indeed, MLNs provide a  compact representation of large Markov Networks with repeated substructures. 
The FOL component of MLNs makes them particularly suitable to represent background knowledge of a wide variety of application domains. As a consequence, MLNs have been successfully used to model knowledge in domains  such as natural language~\cite{RiedelM11,VenugopalCGN14}, computer vision~\cite{TranD08} and social network analysis~\cite{ChenKZML13,FarasatNSB15}.

The kind of statistical regularities (that hold for a given problem) encoded by an MLN,  directly  depends on the type of quantifiers available in the language. Since MLNs are based on FOL, they come equipped with the standard FOL quantifiers
$\exists$ and $\forall$. However, it has been observed that the modeling capabilities of these quantifiers might not be appropriate for certain application scenarios that require a kind of quantification describing, for instance \emph{most}, \emph{few}, or \emph{at least $k$} thresholds, for more  details see~\cite{FarnadiBMGC17,MilchZKHK08} and references therein, and Sec.~\ref{sec:relwork} below. 

\noindent In the present paper we investigate \emph{Quantified Markov Logic Networks (QMNLs)}, the extension of classical 
MLNs with a \emph{statistical quantifier $\foralls$.} Indeed, MLNs lack means to describe certain types of statistics, e.g., the proportion of people, that are maximally connected to others. 
%
This type of modeling capabilities might be useful, for instance, in social network analysis to model influencers. As we shall see, with the use of  the $\foralls$ quantifier QMLNs are able to express this type of statistics.

Formally, this is done as follows: instead of weighted formulas $(\varphi(x_1,\ldots,x_k),w)$ from MLNs, we use \emph{weighted quantified sentences} $$(Q_1x_1,\ldots,Q_kx_k:\varphi(x_1,\ldots,x_k),w),$$ where the $Q_i$ can be arbitrary quantifiers $\exists,\forall,\foralls$, and $\varphi$ is a classical FOL formula, that is, the only quantifiers in $\varphi$ are $\exists,\forall$. The semantics is given in terms of maximization ($\exists$), minimization ($\forall$), and expectation ($\foralls$). While the semantics of the former ones is as expected, it is important to note that the semantics for the newly introduced statistical quantifier $\foralls$ corresponds to uniform sampling of grounding substitutions of the respective variables. As a consequence, standard MLNs essentially correspond to the fragment of QMLNs where all $Q_i$ are $\foralls$ since in this case, the respective statistic represents the probability that the formula $\varphi(x_1,\ldots,x_k)$ is true in a given possible world after grounding the variables $x_1,\ldots,x_k$ using a randomly sampled substitution. 

As a concrete example in QMLNs using the universal statistical quantifiers $\foralls$,  we can now measure the proportion of the population that are smokers and are known by one particular smoker, who knows most other smokers: 
\begin{equation*}
(\exists x \foralls y \colon (\mathsf{smoker}(x) \land \mathsf{knows}(x,y) \land \mathsf{smoker}(y)), 10).
\end{equation*}
%
 
\noindent Besides the mentioned practical reasons, the study of QMLNs also has a strong theoretical motivation. Let us recall the relation of MLNs with quantifier-free formulas and max entropy models constrained by statistics based on the random substitution semantics~\cite{bacchus_halpern_koller,schulte}. 
MLNs correspond to the solution of the maximum entropy relational marginal problem, where the modeled statistics are of the form $(\varphi(x_1,\ldots,x_k),p)$ with $\varphi(x_1,\ldots,x_k)$  a quantifier-free FOL formula and $p$ is the probability that a random tuple $(a_1,\ldots,a_k)$ satisfies $\varphi(a_1,\ldots,a_k)$ in the model that we learn from~\cite{kuzelka.aaai.2018}. Given that QMLNs 
have higher expressive power, it is interesting to investigate whether a similar correspondence also exists for QMLNs -- of course given more expressive statistics.  
It is easy to see that previously used techniques lift to restricted QMLNs with quantifier prefix $\foralls$. Here we show that the correspondence also holds for arbitrary QMLNs and the respective statistics.

\medskip \noindent {\bf Objective and Contributions} The main objective of this paper is to introduce  QMLNs as an extension of MLNs with means to express more complex statistics, and to develop technical   foundations for these QMLNs. Our main technical contributions are $(i)$~the establishment of basic properties of QMNLs, analogous to those existing for standard MLNs; $(ii)$ a generalization of  the  random substitution semantics to QMLNs and $(iii)$ a polynomial time translation from QMLNs to MLNs, yielding a polytime reduction of the maximum a posteriori and marginal inference in QMLNs  to their respective variant problems in standard MLNs. $(iv)$ Furthermore, we pinpoint certain  implications of extending MLNs to QMLNs in the context of \emph{symmetric  weighted  first-order  model counting (WFOMC)}.

\smallskip \noindent {\bf Outline of the Paper} After providing some preliminaries in Section~\ref{sec:background}, we introduce the syntax and semantics of quantified Markov logic networks in Section~\ref{sec:QMNLs}. Then, in Section~\ref{sec:initialObs} basic results on the treatment of negative weights and weights tending to infinity are provided. We also present results on the duality of relational marginal problems and QMLNs. Sections~\ref{sec:tranMAP} to~\ref{sec:tranMARG} contain our translations from QMLNs to MLNs, establishing the polytime reductions described above. Section~\ref{sec:TwoVar} discusses the relation between QMLNs restricted to two variables and WFOMC. Section~\ref{sec:relwork} presents related work and Section~\ref{sec:con} conclusions and future work.

\section{Background and Notation}\label{sec:background}

We next provide some basics on First-Order Logic,  Markov Logic Networks and Relational Marginal Problems. 

\subsection{\bf First-Order Logic (FOL)} We give a short review  of the  function-free  fragment of first-order logic (FOL), considered in this paper.
Let $\mathbf C = \{a,b, \ldots \}$ be  a finite  set of \emph{constants}  and $\mathbf V = \{ x, y, \ldots \}$ an infinite set of  \emph{variables}. A \emph{term} $t$ is an element in $\mathbf{C} \cup {\mathbf V}$. An \emph{atom} is an expression of the form $R(t_1, \ldots, t_n)$, where $R$ is a \emph{predicate name} with \emph{arity} $n$ and terms $t_i$.  As usual, a  FOL-\emph{formula} $\varphi$  is constructed from atoms using \emph{logical connectives $\neg,\land, \lor, \Rightarrow$} and \emph{quantifiers $\exists$ and $\forall$}. We assume  the reader is familiar with the standard notions of quantified and free variables, and sentence. Given a formula $\varphi$, a variable $x$ and a constant $a$, we use $\varphi [x/a]$ to denote the result of substituting in $\varphi$ every occurrence of $x$ with $a$. Let  $\mathbf{x}=(x_1, \ldots, x_n)$ and $\mathbf{a} = (a_1, \ldots, a_n)$ be tuples of variables and constants, respectively, we write $\varphi [\mathbf x/ \mathbf a]$ to denote the application of  $\varphi [x_i/a_i]$ for all $1 \leq i \leq n$.
%
The \emph{grounding of a formula $\varphi(\mathbf{x})$ over a domain $\Delta$}, denoted with $\mathsf{gr}(\varphi,\Delta)$, is the set of all possible sentences obtained from $\varphi$ by substituting all its free variables $\mathbf{x}=(x_1,\ldots,x_k)$ with any possible combination of constants from $\Delta$, that is,
$$\mathsf{gr}(\varphi,\Delta)= \{\varphi[\mathbf{x}/\mathbf{a}]\mid \mathbf{a}\in\Delta^k\}.$$

A \emph{vocabulary} $\sigma$ is a finite set of predicate names such that each predicate name $R\in\sigma$ is associated with an \emph{arity} $\mathsf{ar}(R)$.  Given a vocabulary $\sigma$ and a domain $\Delta$, a \emph{$\sigma$-structure over $\Delta$} is any set $\omega$ consisting only of facts of the form $R(a_1,\ldots,a_n)$ such that $R\in\sigma$, $\mathsf{ar}(R)=n$, and $a_1,\ldots,a_n\in \Delta$. We denote with $\Omega(\sigma,\Delta)$ the set of all $\sigma$-structures over $\Delta$ and refer to the members of $\Omega(\sigma,\Delta)$ with \emph{possible worlds}.
Throughout the paper we often do not make the vocabulary explicit; it is then assumed to be just the set of predicate names used in the formulas. 

The semantics of FOL is defined as usual in the context of Markov Logic Networks. More precisely, we use only finite domains $\Delta$ and assume that always $\mathbf{C}\subseteq \Delta$. Note that we refrain from using a many-sorted domain for the sake of simplicity; all our results lift to that case. Formally, we write $\omega\models\varphi$ when a sentence $\varphi$ is satisfied in a structure $\omega$. Given a set $\Phi$ of sentences, we write $\omega \models \Phi$ if $\omega \models \varphi$ for all $\varphi\in\Phi$.

\subsection{Markov Logic Networks (MLNs)}

A \emph{Markov Logic Network (MLN)} is a finite set of weighted formulas $(\varphi,w)$, where $w \in \mathbbm R\cup\{+\infty\}$ is a weight and $\varphi$ is a FOL-formula, possibly with free variables. If $w = +\infty$ then $(\varphi,w)$ is called a \emph{hard constraint}, otherwise a \emph{soft constraint}. As an example, the soft constraint
$$(\mathsf{smoker}(x) \wedge \mathsf{friends}(x,y) \Rightarrow \mathsf{smoker}(y), 10)$$
intuitively asserts that having friends who are smokers makes one more likely to be a smoker as well, given the weight $10$ is positive. 

The semantics of MLNs is defined following~\cite{BroeckMD14}.\footnote{This slightly differs from the  semantics introduced in~\cite{RichardsonD06}, we discuss it in the related work section.} A given MLN $\Phi = \{(\varphi_1,w_1),(\varphi_2,w_2),\dots,(\varphi_k,w_k) \}$ and a domain $\Delta$ describe a probability distribution $p_{\Phi,\Delta}$ over $\Omega(\sigma,\Delta)$. To define the distribution, let $\Phi_S\subseteq \Phi$ denote
the set of soft constraints in $\Phi$ and $\Phi_H$ denote the set of FOL sentences $\varphi'$ obtained from the the hard constraints $(\varphi,+\infty) \in \Phi$ by adding a prefix of universal quantifiers for all free variables in $\varphi$. For instance, if $\Phi = \{ (\mathsf{smoker}(x),+\infty) \}$ then $\Phi_H = \{ \forall x : \mathsf{smoker}(x) \}$. Now, the distribution is defined by taking, for every $\omega\in\Omega(\sigma,\Delta)$, 
%
$$
p_{\Phi,\Delta}(\omega) = \begin{cases} 
\frac{1}{Z} \exp{\left( \sum_{(\varphi,w) \in \Phi_S} w \cdot  N(\varphi, \omega) \right)} & \omega \models \Phi_H \\
0 & \mbox{otherw.}\end{cases}
$$
where $N(\varphi, \omega)$ is the number of sentences $\varphi' \in \mathsf{gr}(\varphi,\Delta)$ such that $\omega \models \varphi'$, and $Z$ is a normalization constant. 

\paragraph{Reasoning Problems} We study the following problems. 

\emph{Maximum a posteriori (MAP):}
\begin{itemize}
    \item {\bf Input:} an MLN $\Phi$ and a domain $\Delta$
    \item {\bf Problem:} determine the world $\omega\in\Omega(\sigma,\Delta)$ maximizing $p_{\Phi,\Delta}(\omega)$.
\end{itemize}

\smallskip \emph{Marginal inference (MARG):}
\begin{itemize}
    \item {\bf Input:} an MLN $\Phi$, a domain $\Delta$ and a FOL sentence $\varphi$
    \item {\bf Problem:} compute the probability $Pr_{\Phi,\Delta}(\varphi)$ of $\varphi$, i.e.\
\begin{align*}
    Pr_{\Phi,\Delta}(\varphi)=P_{\omega\sim p_{\Phi,\Delta}(\omega)}[\omega\models \varphi],
    %
\end{align*}
where a subscript $d\sim D$ refers to sampling $d$ according to a distribution $D$.

\end{itemize}
%

\subsection{Relational Marginal Problems}\label{sec:RMPs}

MLNs containing only \emph{quantifier-free} FOL formulas can be seen as solutions to a maximum entropy problem~\cite{kuzelka.aaai.2018} constrained by {\em statistics} which are based on the \emph{random-substitution semantics}~\cite{bacchus_halpern_koller,schulte}. Such statistics are defined as follows. For a possible world $\omega \in \Omega(\sigma,\Delta)$ and a quantifier-free FOL formula $\varphi(\mathbf{x})$ with $k$ free variables $\mathbf{x}$, the \emph{statistic $Q_\omega(\varphi(\mathbf{x}))$ of $\varphi(\mathbf{x})$} is defined as
$$Q_\omega(\varphi(\mathbf{x})) = \mathbb{E}_{\mathbf{a} \sim \operatorname{Unif}(\Delta^k)} \left[ \mathbbm{1}\left( \omega \models \varphi[\mathbf{x}/\mathbf{a}] \right) \right]$$
where $\operatorname{Unif}(\Delta^k)$ denotes the uniform distribution over elements of the set $\Delta^k$.\footnote{Here, we depart slightly from \cite{kuzelka.aaai.2018} in that we do not require the random substitutions to be injective. 
However, the duality of MLNs and relational marginal problems holds as well in the case of non-injective substitutions.}

Intuitively, the statistics $Q_\omega(\varphi(\mathbf{x}))$ measures how likely it is that the formula $\varphi(\mathbf{x})$ is satisfied in $\omega$ when a random substitution of $\mathbf{x}$ by domain elements is picked. The statistics $Q_\omega(\varphi(\mathbf{x}))$ can then be straightforwardly extended to statistics of probability distributions $Q[\varphi(\mathbf{x})]$. For a distribution $p(\omega)$ over $\Omega(\sigma,\Delta)$, $Q[\varphi(\mathbf{x})]$ is defined as
$$Q[\varphi(\mathbf{x})] = \mathbb{E}_\omega\left[ Q_{\omega}(\varphi(\mathbf{x})) \right] = \sum_{\omega \in \Omega(\sigma,\Delta)} p(\omega) \cdot Q_\omega(\varphi(\mathbf{x})).$$
Now, the \emph{maximum entropy relational marginal problem} is defined as follows:
\begin{itemize}
    \item \textbf{Input}: set of statistics $\{(\varphi_1(\mathbf{x}_1),q_1),\ldots,(\varphi_n(\mathbf{x}_n),q_n)\}$, domain $\Delta$
    
    \item \textbf{Problem}: find a distribution $p^*(\omega)$ over $\Omega(\sigma,\Delta)$ which has maximum entropy and satisfies the constraints 
    
    \begin{center}$Q[\varphi_1(\mathbf{x_1})] = q_1$, $\dots$, $Q[\varphi_n(\mathbf{x}_n)] = q_n$.\end{center}
    
\end{itemize}
To motivate the problem, we note that it has been shown in~\cite{kuzelka.aaai.2018} that
\begin{enumerate}[label=(\roman*)]

    \item the solution of the relational marginal problem is a Markov logic network of the form $\{(\varphi_1(\mathbf{x}_1),w_1),\dots,(\varphi_n(\mathbf{x}_n),w_n)\}$ where the weights $w_i$ are obtained from the dual problem of the maximum entropy relational marginal problem, and
    
    \item if the input probabilities $q_1,\ldots,q_n$ are estimated from data with domain $\Delta'$ with $|\Delta'|=|\Delta|$, then the result of the maximum entropy relational marginal problem coincides with maximum likelihood estimation, the most common approach to weight learning in MLNs~\cite{RichardsonD06}.
    
\end{enumerate}
In a sense, Point~(i) can be viewed as the relational generalization of what has been done in the propositional setting~\cite{optimization_entropy_counting}.
Let us finally remark that Point~(ii) does not hold when $|\Delta| \neq |\Delta'|$ which follows from the results by~\citeauthor{shalizi2013consistency} (\citeyear{shalizi2013consistency}). Hence, the relational marginal view is more general from the statistical point of view; we refer to \cite{kuzelka.aaai.2018} for  details.


\section{Quantified Markov Logic Networks}\label{sec:QMNLs}

We introduce the notion of \emph{Quantified Markov Logic Networks (QMLNs)}, a generalization of standard MLNs  capable of expressing expectations using ``statistical'' quantifiers. In QMLNs, the main ingredients of MLNs -- weighted formulas $(\varphi,w)$ with $w$ a weight and $\varphi$ a FOL formula -- are replaced with weighted quantified sentences $(\alpha,w)$.


\begin{definition}[Quantified Sentence] A \emph{quantified sentence} is a formula $\alpha$ with $$\alpha = Q_1 x_1 \ldots Q_n x_n \colon \varphi(x_1,\ldots,x_n),$$ where each $Q_i$ is a quantifier from $\{\forall,\forall^*,\exists\}$ and $\varphi(x_1,\ldots,x_n)$ is a classical first-order formula with free variables precisely $x_1,\ldots,x_n$.
\end{definition}
Note that every FOL sentence is also a quantified sentence, but conversely a quantified sentence using the quantifier $\forall^*$ is not a FOL sentence. 

\begin{definition}[QMLNs]
  A \emph{Quantified Markov Logic Network (QMLN)} is a finite set $\Phi$ of pairs $(\alpha,w)$ such that $\alpha$ is a quantified sentence and $w\in \mathbbm{R}\cup\{+\infty,-\infty\}$.
\end{definition}

\medskip \noindent  Before we can give the semantics of QMLNs, we give the semantics for quantified sentences. Intuitively, given a quantified sentence $\alpha$ and a possible world $\omega$, we measure the extent to which $\alpha$ is satisfied in $\omega$.



\begin{definition}[Sentence Statistics]\label{def:sentence_statistics}
Let $\omega \in \Omega(\sigma,\Delta)$ be a possible world and $\alpha$ be a quantified sentence. Then the $\alpha$-statistic of $\omega$, denoted $Q_\omega(\alpha)$, is defined as follows:
\begin{itemize}
    
    \item if $\alpha$ is an FOL sentence, then 
    \begin{equation}\label{eq:base-case}
    Q_{\omega}(\alpha) = 
    \mathbbm{1}\left( \omega \models \alpha \right),
    \end{equation}
    
    \item if $\alpha = \forall x \colon \alpha'(x)$ is not an FOL sentence, then 
    
    \begin{equation}
    Q_\omega(\alpha) = \min_{a \in \Delta} Q_\omega(\alpha'[x/a]),
    \end{equation}
    
    \item if $\alpha = \exists x \colon \alpha'(x)$ is not an FOL sentence, then 
    \begin{equation}
    Q_\omega(\alpha) = \max_{a \in \Delta} Q_\omega(\alpha'[x/a]),
    \end{equation}
    
    \item if $\alpha = \foralls x \colon  \alpha'(x)$ is not an FOL sentence, then 
    \begin{equation}\label{eq:avg}
    Q_\omega(\alpha) = \frac{1}{|\Delta|}\sum_{a \in \Delta} Q_\omega(\alpha'[x/a]).
    \end{equation}
\end{itemize}
\end{definition}
\noindent Note that the case (\ref{eq:base-case}) in the above definition is only applied when $\alpha$ is a classical FOL formula, that is, when it does not contain any $\forall^*$ quantifiers and, as such, it serves as a base case of the recursive definition. Moreover, the case of the quantifier $\foralls$ in~\eqref{eq:avg} above can be alternatively expressed as
\begin{equation}\label{eq:avg_as_exp}
    Q_\omega(\alpha) = \mathbb{E}_{a \sim \operatorname{Unif}(\Delta)}\left[{Q_\omega(\alpha'[x/a])}\right]
\end{equation}
where the expectation is w.r.t.\ a uniform distribution of $a$ over $\Delta$. From this we see that the definition of statistics $Q_\omega(\alpha)$ given by Definition \ref{def:sentence_statistics} generalizes that of statistics based on random substitution semantics, cf.\ Section \ref{sec:RMPs}.

\begin{remark}\label{rem:1} 
We can easily check the following property of sentence statistics. Let $\alpha$ be a sentence and $\omega \in \Omega(\sigma,\Delta)$ be a possible world. If $\alpha'$ is obtained from $\alpha$ by replacing every quantifier $\foralls$ by its classical counterpart $\forall$ then
$$Q_\omega(\alpha) = 1 \mbox{ iff } \omega \models \alpha'.$$
As a result of this, we will sometimes abuse notation and write $\omega \models \alpha$ when $Q_{\omega}(\alpha) = 1$ even if $\alpha$ is not an FOL sentence.
\end{remark}
\begin{example}\label{example:first}
In  classical first-order logic, the sentence
$$\exists x \forall y \colon  \mathsf{knows}(x,y)$$
asserts that there is someone who knows everyone else (e.g. in a social network). If we replace $\forall y$ by $\foralls y$, we get a quantified sentence
\begin{equation}\label{eq:knows}
\alpha= \exists x \foralls y \colon  \mathsf{knows}(x,y)
\end{equation}%
which relaxes the hard constraint. Indeed, its associated statistic $Q_{\omega}(\exists x \foralls y \colon \mathsf{knows}(x,y))$ measures the maximal proportion of people known by a single domain element. In graph-theoretical terms, this corresponds to the maximum out-degree of domain elements.
Note that we could not directly express the same statistics in normal MLNs since, using normal MLNs, we could only express statistics corresponding to the sentence $\foralls x \exists y : \mathsf{knows}(x,y)$, which intuitively measures the proportion of people who know at least one person. As we show later in the paper, it is possible to express MLNs with constraints encoding the same statistics but in order to do that we will have to enlarge the vocabulary $\sigma$, introducing additional predicates. 
\demo
\end{example}




We now have almost all the ingredients to define the semantics of QMLNs. What remains is to extend the definition of $\Phi_H$, i.e. the hard constraints. Given a QMLN $\Phi = \{(\alpha_1,w_1),(\alpha_2,w_2),\dots,(\alpha_k,w_k) \}$, we define $\Phi_H$ for QMLNs as follows. First, we define $\Phi_H^{+\infty}$ to be the set of FOL sentences obtained from weighted sentences $(\alpha,+\infty) \in \Phi$ by replacing all $\forall^*$ quantifiers by the classical $\forall$ quantifiers. Second, we define $\Phi^{-\infty}_H$ to be the set of FOL sentences $\alpha'$ that are obtained from weighted sentences $(\alpha,-\infty) \in \Phi$ where $\alpha = Q_1 x_1,\dots,Q_n x_n : \varphi(x_1,\dots,x_n)$, as $\alpha' = \widetilde{Q_1} x_1, \dots, \widetilde{Q_n} x_n : \neg \varphi(x_1,\dots,x_n)$ where $\widetilde{\forall^*} = \forall$, $\widetilde{\forall} = \exists$ and $\widetilde{\exists} = \forall$. Finally, we define $\Phi_H = \Phi^{-\infty}_H \cup \Phi^{+\infty}_H$.

Next we define the semantics of QMLNs.

\begin{definition}[Semantics of QMLNs]
Given a QMLN $\Phi = \{(\alpha_1,w_1),(\alpha_2,w_2),\dots,(\alpha_k,w_k) \}$ and a domain $\Delta$, the probability of a possible world $\omega \in \Omega(\sigma,\Delta)$ is defined as:
$$
p_{\Phi,\Delta}(\omega) = \begin{cases} 
\frac{1}{Z} \exp{\left( \sum_{(\alpha,w) \in \Phi_S} w \cdot Q_{\omega}(\alpha) \right)} & \omega \models \Phi_H \\
0 & \mbox{otherwise}\end{cases}
$$
where $Q_{\omega}(\alpha)$ is the $\alpha$-statistic of $\omega$, and
$Z$ is a normalization constant.
\end{definition}

We illustrate the semantics by continuing Example~1. 

\smallskip\noindent\textbf{Example~1 (continued).} Consider again the quantified sentence $\alpha$ from Equation~\eqref{eq:knows} in Example~1. If you include the weighted quantified sentence $(\alpha,w)$ for some $w>0$ in a QMLN, worlds in which there is an individual who knows most of the people get a higher probability than worlds for which this is not the case. As a result, worlds that have an ``influencer'' (and are thus closer to a social network) are considered more likely. \demo

\begin{definition}[Marginal query problem]\label{def:marginal_query}
Let $\alpha$ be a sentence and $p_{\Phi,\Delta}(\omega)$ be the probability distribution over $\Omega(\sigma,\Delta)$ induced by the QMLN $\Phi$ and domain $\Delta$. The \emph{marginal query problem} is to compute the marginal probability defined as:
$$Q_{\Phi,\Delta}[\alpha] = \mathbb{E}_\omega \left[ Q_\omega(\alpha) \right] = \sum_{\omega \in \Omega(\sigma,\Delta)} p_{\Phi,\Delta}(\omega) \cdot Q_\omega(\alpha).$$
\end{definition}

\begin{remark}\label{remark:qfol}
If $\alpha$ is a sentence that does not contain $\foralls$ quantifiers and $p_{\Phi,\Delta}(\omega)$ is the induced distribution over $\Omega(\sigma,\Delta)$ then
$Q_{\Phi,\Delta}[\alpha] = P_{\omega \sim p_{\Phi,\Delta}(\omega)}\left[ \omega \models \alpha \right]$, that is, it coincides with the marginal probability of $\alpha$. Thus,   Definition~\ref{def:marginal_query} generalizes the classical definition of marginal inference.
\end{remark}


\section{Initial Observations}\label{sec:initialObs}

The goal of this section is to make some initial observations about QMLNs, which will be either of independent interest or exploited in some later proof. 

First, it is easy to see that QMLNs generalize MLNs in the sense that we can view a weighted formula $(\varphi,w)$ as a quantified sentence with implicit $\foralls$-quantification over all free variables of $\varphi$. More formally, we have:
\begin{proposition}\label{prop:general}
  Let $\Phi_0$ be an MLN and obtain a QMLN $\Phi$ from $\Phi_0$ by replacing every weighted formula $(\varphi(x_1,\ldots,x_n),w)\in\Phi_0$ with the weighted quantified sentence $(\foralls x_1\ldots\foralls x_n \colon \varphi(x_1,\ldots,x_n), w \cdot |\Delta|^n)$. Then, for every $\Delta$ and $\omega\in\Omega(\sigma,\Delta)$, we have $p_{\Phi_0,\Delta}(\omega)=p_{\Phi,\Delta}(\omega)$.
\end{proposition}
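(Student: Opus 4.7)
The plan is to show that both the exponent in the unnormalized weight and the set of worlds with nonzero probability coincide between $\Phi_0$ and $\Phi$; once these match, the normalization constants $Z$ agree automatically and the two distributions must be equal.

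First I would handle the soft part. For a weighted formula $(\varphi(x_1,\ldots,x_n),w)\in\Phi_0$ with corresponding quantified sentence $\alpha = \foralls x_1 \ldots \foralls x_n : \varphi(x_1,\ldots,x_n)$, I would unfold Definition~\ref{def:sentence_statistics} by a simple induction on $n$: each application of the $\foralls$-case contributes a factor $1/|\Delta|$ and a summation $\sum_{a\in\Delta}$, so after peeling off all $n$ quantifiers one obtains
\[
  Q_\omega(\alpha) = \frac{1}{|\Delta|^n}\sum_{\mathbf{a}\in\Delta^n} \mathbbm{1}\bigl(\omega\models\varphi[\mathbf{x}/\mathbf{a}]\bigr) = \frac{N(\varphi,\omega)}{|\Delta|^n},
\]
where the last equality uses the very definition of $N(\varphi,\omega)$ as the number of satisfied groundings. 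Multiplying by the rescaled weight $w\cdot|\Delta|^n$ then yields $w\cdot N(\varphi,\omega)$, which is exactly the contribution of $(\varphi,w)$ to the MLN exponent. Summing over all soft constraints shows that the exponents appearing in $p_{\Phi_0,\Delta}(\omega)$ and $p_{\Phi,\Delta}(\omega)$ agree pointwise.

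Next I would treat the hard constraints. In the MLN $\Phi_0$, a hard constraint $(\varphi,+\infty)$ contributes the FOL sentence $\forall x_1 \ldots \forall x_n : \varphi$ to $(\Phi_0)_H$. In the QMLN $\Phi$, the corresponding pair is $(\foralls x_1\ldots \foralls x_n : \varphi,+\infty)$, and by the definition of $\Phi_H^{+\infty}$ each $\foralls$ is replaced by $\forall$, giving the same sentence $\forall x_1 \ldots \forall x_n : \varphi$. There are no $(-\infty)$-weighted sentences in $\Phi$ by construction, so $\Phi_H = \Phi_H^{+\infty}$ coincides with $(\Phi_0)_H$. Consequently the two distributions have exactly the same support.

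Since the unnormalized weights $\exp(\cdot)$ agree on the common support and are zero off it, the partition functions $Z$ are equal, and therefore $p_{\Phi_0,\Delta}(\omega) = p_{\Phi,\Delta}(\omega)$ for every $\omega \in \Omega(\sigma,\Delta)$. The only step requiring care is the inductive unfolding of nested $\foralls$-quantifiers into the sum over $\Delta^n$, but this is purely mechanical and follows immediately from the recursive definition, so I do not anticipate a genuine obstacle.
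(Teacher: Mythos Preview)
Your argument is correct and is exactly the straightforward verification the paper has in mind; indeed, the paper states this proposition without proof, treating the unfolding of the nested $\foralls$-quantifiers into $N(\varphi,\omega)/|\Delta|^n$ and the matching of hard constraints as immediate from the definitions. There is nothing to add.
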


\subsection{Negation in QMLNs}

It is well-known that in classical MLNs it is without loss of generality to assume positive weights. We show an analogous property of QMLNs.

\begin{definition}[Negation]\label{def:negation}
We define the negation $\operatorname{neg}(\alpha)$ of  quantified sentences $\alpha=Q_1x_1\ldots Q_nx_n \colon \varphi(x_1,\ldots,x_n)$ by taking
$$\operatorname{neg}(\alpha)=\overline Q_1x_1\ldots \overline Q_nx_n \colon \neg\varphi(x_1,\ldots,x_n),$$
where $\overline \exists$ is $\forall$, $\overline\forall$ is $\exists$, and $\overline{\foralls}$ is $\foralls$.
%
  
    
    

\end{definition}

\noindent It is easy to check that $\operatorname{neg}(\operatorname{neg}(\alpha)) = \alpha$.
Next we illustrate the way negation works in our setting on a concrete example.

\begin{example}\label{exa:neg}
Let us see what happens if we take the sentence $\exists x \foralls y\colon \mathsf{knows}(x,y)$ from Example \ref{example:first} and negate it. Using Definition \ref{def:negation}, we obtain
$$\operatorname{neg}\left( \exists x \foralls y\colon \mathsf{knows}(x,y) \right) = \forall x \foralls y\colon \neg \mathsf{knows}(x,y).$$
For the statistic $Q_{\omega}\left(\forall x \foralls y\colon \neg \mathsf{knows}(x,y)\right)$ we have
\begin{align*}
&    Q_{\omega}\left(\forall x \foralls y\colon \neg \mathsf{knows}(x,y)\right) \\
&\quad\quad    = \min_{t \in \Delta} \frac{1}{|\Delta|} \sum_{u \in \Delta} \mathbbm{1}\left( \omega \models \neg \mathsf{knows}(t,u) \right) \\
&\quad\quad   = \min_{t \in \Delta} \frac{1}{|\Delta|} \sum_{u \in \Delta} \left(1- \mathbbm{1}\left( \omega \models \mathsf{knows}(t,u) \right)\right) \\
&\quad\quad   = 1 - \max_{t \in \Delta} \frac{1}{|\Delta|} \sum_{u \in \Delta} \mathbbm{1}\left( \omega \models \mathsf{knows}(t,u) \right) \\
&\quad\quad   = 1 - Q_{\omega}(\exists x \foralls y\colon \mathsf{knows}(x,y)).
\end{align*}
\demo
\end{example}
\noindent In Example~\ref{exa:neg}, the statistic of a negation of a sentence $\alpha$ turns out to be equal to one minus the statistic of that sentence, which is intuitively desirable. By repeatedly applying the shown argument, one  can show that this holds in general:

\begin{proposition}\label{prop:negation}
For any sentence $\alpha$ and any possible world $\omega \in \Omega(\sigma,\Delta)$ the following holds: 
$$Q_{\omega}(\negation{\alpha}) = 1-Q_\omega(\alpha).$$
\end{proposition}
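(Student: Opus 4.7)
The plan is to prove the claim by induction on the length $n$ of the quantifier prefix of $\alpha = Q_1 x_1 \ldots Q_n x_n \colon \varphi(x_1,\ldots,x_n)$. The base case $n=0$ is immediate: $\alpha = \varphi$ is a ground FOL sentence, $\operatorname{neg}(\alpha) = \neg\varphi$ is its classical negation, and the base clause~\eqref{eq:base-case} of $Q_\omega$ yields $Q_\omega(\operatorname{neg}(\alpha)) = \mathbbm{1}(\omega \models \neg\varphi) = 1 - \mathbbm{1}(\omega\models\varphi) = 1 - Q_\omega(\alpha)$ at once.

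In the inductive step I distinguish two situations. First, if $\alpha$ contains no $\foralls$ anywhere in its prefix, then both $\alpha$ and $\operatorname{neg}(\alpha)$ are classical FOL sentences, and repeated application of the classical quantifier de Morgan laws shows that $\operatorname{neg}(\alpha)$ is logically equivalent to $\neg\alpha$; the claim then follows directly from~\eqref{eq:base-case}. Otherwise $\alpha$ is not an FOL sentence, so the recursive clauses of Definition~\ref{def:sentence_statistics} apply and I peel off the outer quantifier $Q_1$. Writing $\alpha'(x_1) = Q_2 x_2 \ldots Q_n x_n \colon \varphi$, one checks directly from Definition~\ref{def:negation} that $\operatorname{neg}(\alpha) = \overline{Q}_1 x_1 \colon \operatorname{neg}(\alpha')(x_1)$, so after substituting $a\in\Delta$ the peeled bodies coincide, namely $\operatorname{neg}(\alpha'[x_1/a])$, which has prefix length $n-1$ and is therefore covered by the induction hypothesis. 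Applying the recursive clause for $\overline{Q}_1$ to $Q_\omega(\operatorname{neg}(\alpha))$ and invoking the IH then reduces the claim to one of three elementary arithmetic identities: $\frac{1}{|\Delta|}\sum_a (1-c_a) = 1 - \frac{1}{|\Delta|}\sum_a c_a$ when $Q_1 = \foralls$, $\max_a(1-c_a) = 1 - \min_a c_a$ when $Q_1 = \forall$, and $\min_a(1-c_a) = 1 - \max_a c_a$ when $Q_1 = \exists$. Example~\ref{exa:neg} already carries out this computation for the specific prefix $\exists\foralls$, and the general case is identical in spirit.

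The main obstacle is really just the bookkeeping: one must verify the commutation between peeling the outer quantifier, substituting a constant, and applying $\operatorname{neg}$, and one must not conflate the base clause~\eqref{eq:base-case} of $Q_\omega$ (triggered whenever $\alpha$ is FOL, independently of prefix length) with the base case of the induction ($n=0$)---this is precisely why the inductive step splits into the two situations above. Once those points are handled cleanly, the remainder is mechanical and reduces to the three identities listed plus the classical de Morgan argument for the FOL fragment.
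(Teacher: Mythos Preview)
Your proof is correct and follows essentially the same approach as the paper, which merely points to Example~\ref{exa:neg} and remarks that ``by repeatedly applying the shown argument'' the general case follows. You have made this rigorous by a clean induction on the prefix length, correctly separating the FOL case (handled directly via classical de~Morgan and clause~\eqref{eq:base-case}) from the non-FOL case (handled by peeling $Q_1$, invoking the IH, and the three arithmetic identities for $\min$/$\max$/average); the only quibble is the word ``ground'' in your base case, since $\varphi$ may still contain classical bound variables, but this does not affect the argument.
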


Next we show that the same distribution represented by a QMLN $\Phi$ can be represented by another QMLN in which we replace some of the sentences by their negations while also inverting the signs of their respective weights. To show that we will need the next  lemma.

\begin{lemma}\label{lemma:constant_difference}
Let $\Delta$ be a finite domain and $\Phi$ a QMLN. Let $\alpha$, $\beta$ be two sentences and $\Phi_\alpha = \{(\alpha,w) \} \cup \Phi$, $\Phi_\beta = \{ (\beta,w) \} \cup \Phi$ and $\Phi_\beta' = \{ (\beta,-w) \} \cup \Phi$ where $w$ is finite. Then
\begin{enumerate}
    \item if
$Q_{\omega}(\alpha) = Q_{\omega}(\beta) + C_\Delta$
for all $\omega \in \Omega(\sigma,\Delta)$ 
for some constant $C_\Delta$, then 
$p_{\Phi_{\alpha},\Delta}(\omega) = p_{\Phi_\beta,\Delta}(\omega).$
\item if
$Q_{\omega}(\alpha) = -Q_{\omega}(\beta) + C_\Delta$
for all $\omega \in \Omega(\sigma,\Delta)$ for some constant $C_\Delta$, then 
$p_{\Phi_{\alpha},\Delta}(\omega) = p_{\Phi_\beta',\Delta}(\omega).$
\end{enumerate}
\end{lemma}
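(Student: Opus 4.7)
The plan is to observe that the two cases reduce to the elementary fact that adding a constant (independent of $\omega$) to the exponent in the unnormalized weight of every possible world does not change the normalized distribution, since the additive constant factors out as a multiplicative constant absorbed by $Z$. Since the added weighted sentence $(\alpha, w)$ (resp.\ $(\beta, w)$ or $(\beta, -w)$) has \emph{finite} weight, it contributes nothing to $\Phi_H$, so the hard-constraint part of the distributions agree: $\omega \models (\Phi_\alpha)_H$ iff $\omega \models (\Phi_\beta)_H$ iff $\omega \models (\Phi_\beta')_H$ iff $\omega \models \Phi_H$. Hence the zero-probability worlds match, and it suffices to handle the worlds with $\omega \models \Phi_H$.

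For part~(1), I would write out the unnormalized weight of a world $\omega$ with $\omega \models \Phi_H$ under $\Phi_\alpha$, namely
\[
\exp\!\Bigl( w \cdot Q_\omega(\alpha) + \sum_{(\gamma,v) \in \Phi_S} v \cdot Q_\omega(\gamma) \Bigr),
\]
and use the hypothesis $Q_\omega(\alpha) = Q_\omega(\beta) + C_\Delta$ to rewrite this as
\[
\exp(w \cdot C_\Delta) \cdot \exp\!\Bigl( w \cdot Q_\omega(\beta) + \sum_{(\gamma,v) \in \Phi_S} v \cdot Q_\omega(\gamma) \Bigr).
\]
The first factor is constant in $\omega$, so the normalization constant $Z_{\Phi_\alpha}$ equals $\exp(w C_\Delta) \cdot Z_{\Phi_\beta}$, and after dividing through the two distributions coincide.

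Part~(2) is an analogous computation: the hypothesis $Q_\omega(\alpha) = -Q_\omega(\beta) + C_\Delta$ lets us rewrite the unnormalized weight under $\Phi_\alpha$ as
\[
\exp(w \cdot C_\Delta) \cdot \exp\!\Bigl( (-w) \cdot Q_\omega(\beta) + \sum_{(\gamma,v) \in \Phi_S} v \cdot Q_\omega(\gamma) \Bigr),
\]
which is, up to the $\omega$-independent factor $\exp(w C_\Delta)$, the unnormalized weight under $\Phi_\beta'$. Normalization again eliminates the constant factor.

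There is no real obstacle here; the one point that needs mild care is confirming that the hard-constraint sets agree in all three QMLNs, which follows immediately from the assumption that $w$ is finite so that the added soft sentence never enters $\Phi_H^{+\infty}$ or $\Phi_H^{-\infty}$. The lemma will later be combined with Proposition~\ref{prop:negation} (applied with $C_\Delta = 1$) to show that any sentence can, without loss of generality, be replaced by its negation while flipping the sign of its weight.
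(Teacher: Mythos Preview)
Your proposal is correct and follows essentially the same approach as the paper: both arguments show that the unnormalized weights under the two QMLNs differ by the $\omega$-independent multiplicative factor $\exp(w\cdot C_\Delta)$, which is then absorbed by normalization. Your version is in fact slightly more careful in explicitly noting that the hard-constraint sets coincide because $w$ is finite, a point the paper's proof leaves implicit.
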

\begin{proof}
For the first case, we have
\begin{align*}
    &\frac{p_{\Phi_{\alpha},\Delta}(\omega)}{p_{\Phi_{\beta},\Delta}(\omega)} = \frac{Z_\beta}{Z_\alpha} \cdot\frac{\exp\left(\sum_{(\alpha,w) \in \Phi_\alpha} w \cdot Q_\omega(\alpha_i) \right)}{\exp\left(\sum_{(\alpha',w') \in \Phi_\beta} w' \cdot Q_\omega(\alpha') \right)} \\
    &= \frac{Z_\beta}{Z_\alpha} \exp\left( w \cdot C_\Delta \right) \\
    &= \frac{\sum_{\omega' \in \Omega(\sigma,\Delta)}\exp\left(\sum_{(\alpha',w') \in \Phi_\beta} w' \cdot Q_{\omega'}(\alpha') \right)}{\sum_{\omega' \in \Omega(\sigma,\Delta)}\exp\left(\sum_{(\alpha,w) \in \Phi_\alpha} w \cdot Q_{\omega'}(\alpha)\right)} \cdot e^{ w \cdot C_\Delta} \\
    &= \frac{1}{e^{ w \cdot C_\Delta}} \cdot e^{ w \cdot C_\Delta} = 1
\end{align*}
The reasoning for the second case, $Q_{\omega}(\alpha) = -Q_{\omega}(\beta) + C_\Delta$, is completely analogical.
\end{proof}

\begin{proposition}\label{prop:negative_weight}
Let $\Phi = \{(\alpha_1,w_1), \dots, (\alpha_k,w_k) \}$ and $\Phi' = \{(\negation{\alpha_1},-w_1), \dots, (\alpha_k,w_k) \}$. Then, for every domain $\Delta$, and every $\omega \in \Omega(\sigma,\Delta)$, we have:
$$p_{\Phi,\Delta}(\omega) = p_{\Phi',\Delta}(\omega).$$
\end{proposition}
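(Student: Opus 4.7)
The plan is to reduce the claim to Lemma~\ref{lemma:constant_difference}(2) via Proposition~\ref{prop:negation}, after first separating the soft- and hard-weight cases. Since $\Phi$ and $\Phi'$ agree on all pairs except the first, and the soft-constraint sum in $p_{\Phi,\Delta}$ only sees sentences with finite weight while the $\Phi_H$ part only sees sentences with infinite weight, I can treat the two regimes independently.

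First, assume $w_1 \in \mathbb{R}$. Then $(\alpha_1, w_1)$ and $(\operatorname{neg}(\alpha_1), -w_1)$ are both soft constraints, so $\Phi_H = \Phi'_H$ and it suffices to compare the probabilities on models of the shared hard constraints. By Proposition~\ref{prop:negation}, for every $\omega \in \Omega(\sigma,\Delta)$,
\begin{equation*}
Q_\omega(\alpha_1) \;=\; 1 - Q_\omega(\operatorname{neg}(\alpha_1)) \;=\; -\,Q_\omega(\operatorname{neg}(\alpha_1)) + 1 .
\end{equation*}
Setting $\beta = \operatorname{neg}(\alpha_1)$ and $C_\Delta = 1$, Lemma~\ref{lemma:constant_difference}(2) applied with the remainder QMLN $\Phi \setminus \{(\alpha_1, w_1)\}$ immediately yields $p_{\Phi,\Delta}(\omega) = p_{\Phi',\Delta}(\omega)$, as required.

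Second, assume $w_1 = +\infty$ (the $-\infty$ case is symmetric by the involution $\operatorname{neg}(\operatorname{neg}(\alpha)) = \alpha$). Here the soft-constraint sums in $\Phi_S$ and $\Phi'_S$ already coincide, so the only thing to verify is that the hard constraint contributed by $(\alpha_1, +\infty)$ to $\Phi^{+\infty}_H$ is logically identical to the one contributed by $(\operatorname{neg}(\alpha_1), -\infty)$ to $\Phi'^{\,-\infty}_H$. Writing $\alpha_1 = Q_1 x_1 \ldots Q_n x_n : \varphi$, the first is obtained by replacing every $\foralls$ in $\alpha_1$ with $\forall$, while the second is obtained from $\operatorname{neg}(\alpha_1) = \overline{Q_1} x_1 \ldots \overline{Q_n} x_n : \neg\varphi$ by applying the tilde operation and then negating the body; one checks case-by-case that $\widetilde{\overline{Q}} = Q$ whenever $Q \in \{\exists, \forall\}$ and that $\widetilde{\overline{\foralls}} = \forall$, so the resulting prefixes match, and the double negation collapses the body back to $\varphi$.

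The only potential obstacle is the bookkeeping in the infinite case: one has to be sure that the definitions of $\Phi_H^{+\infty}$ and $\Phi_H^{-\infty}$ are set up in exactly the complementary way, which the quick case analysis above confirms. Everything else is a direct invocation of Proposition~\ref{prop:negation} and Lemma~\ref{lemma:constant_difference}.
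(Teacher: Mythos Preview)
Your proof is correct and follows exactly the approach sketched in the paper: finite weights via Proposition~\ref{prop:negation} combined with Lemma~\ref{lemma:constant_difference}(2), and infinite weights by unwinding the definitions of $\Phi_H^{+\infty}$ and $\Phi_H^{-\infty}$. You have simply filled in the details that the paper leaves to the reader, including the quantifier bookkeeping $\widetilde{\overline{Q}}$ in the hard-constraint case.
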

\begin{proof}
The proof follows straightforwardly from Lemma \ref{lemma:constant_difference} above and Proposition \ref{prop:negation} for finite weights and from the definition of semantics of QMLNs for infinite weights.
\end{proof}

\noindent It follows from Proposition \ref{prop:negation} and Proposition \ref{prop:negative_weight} that we can focus on QMLNs that have only positive weights. It also follows that it makes no sense to have a sentence and its negation in the set of sentences defining an QMLN.

\subsection{Limit $w \rightarrow \infty$}

In the seminal  paper on Markov logic networks \cite{RichardsonD06} it was shown that if the weights of formulas of an MLN tend to infinity at the same pace, in the limit the MLN will define a uniform distribution over models of the classical first-order logic theory consisting of the MLN's rules. More precisely, let us denote with $\widehat{\Phi}$ the first-order logic sentence obtained from a given $\Phi = \{(\varphi_1,w), \dots, (\varphi_n,w) \}$ by taking the conjunction of all formulas of the shape $\forall x_1,\dots,x_k\colon \varphi_i(x_1,\dots,x_k)$ where $x_1,\dots,x_k$ are  precisely the free variables in $\varphi_i$. Then the possible worlds that have non-zero probability for $w \rightarrow \infty$ are precisely the models of $\widehat{\Phi}$. 
The next proposition generalizes this by establishing that an analogical property also holds for QMLNs.

\begin{proposition}\label{prop:limit}
Let $\Delta$ be a finite domain and $\Phi(w) = \{(\alpha_1,w), \dots, (\alpha_n,w) \}$ be a QMLN where every weight is $w$, and let $\widehat\Phi$ denote the FOL sentence obtained from 
$\alpha_1\wedge\ldots\wedge\alpha_n$ by replacing every occurrence of $\foralls$ by $\forall$. If $\widehat\Phi$ has a model in $\Omega(\sigma, \Delta)$, then, for every $\omega\in\Omega(\sigma,\Delta)$, we have
\begin{align*}
    \lim_{w \rightarrow \infty} p_{\Phi(w),\Delta}(\omega) = 
    \begin{cases}
    0 & \mbox{if } \omega \not\models \widehat{\Phi}, \\
    \frac{1}{|\{\omega \in \Omega(\sigma,\Delta) \mid \omega \models \widehat{\Phi} \}|} & \mbox{if } \omega \models \widehat{\Phi}.
    \end{cases}
\end{align*}
%
\end{proposition}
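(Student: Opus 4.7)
The plan is to exploit the observation that each statistic $Q_\omega(\alpha_i)$ lies in $[0,1]$ and equals $1$ precisely when $\omega$ satisfies the FOL sentence $\widehat{\alpha_i}$ obtained by replacing every $\foralls$ in $\alpha_i$ by $\forall$ (this is Remark~\ref{rem:1}). Writing $S(\omega):=\sum_{i=1}^n Q_\omega(\alpha_i)$ and $M:=\{\omega\in\Omega(\sigma,\Delta)\mid \omega\models\widehat\Phi\}$, we have $S(\omega)=n$ iff $\omega\in M$, and $M\neq\emptyset$ by the hypothesis that $\widehat\Phi$ is satisfiable in $\Omega(\sigma,\Delta)$. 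Since in the exponent $wS(\omega)$ is maximized exactly on $M$, one expects $p_{\Phi(w),\Delta}$ to concentrate uniformly on $M$ as $w\to\infty$.

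The first step is to establish a uniform gap: I would argue that on a finite domain $\Delta$ each statistic $Q_\omega(\alpha_i)$ takes values in a finite subset of $[0,1]\cap\mathbb{Q}$, because by Definition~\ref{def:sentence_statistics} it is built from $\{0,1\}$-valued indicators by finitely many $\min$, $\max$, and averaging operations over $\Delta$, yielding a rational whose denominator divides a fixed power of $|\Delta|$. As $\Omega(\sigma,\Delta)$ is finite as well, there exists a constant $\epsilon>0$ with
\[
S(\omega)\le n-\epsilon \quad \text{for every } \omega\in\Omega(\sigma,\Delta)\setminus M.
\]

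Next I would split the normalizer accordingly:
\[
Z(w) \;=\; \sum_{\omega\in M} e^{wn} + \sum_{\omega\notin M} e^{wS(\omega)} \;=\; |M|\cdot e^{wn} + R(w),
\]
where $0\le R(w)\le |\Omega(\sigma,\Delta)|\cdot e^{w(n-\epsilon)}$. Dividing through by $e^{wn}$ gives $R(w)/e^{wn}\to 0$ and hence $Z(w)/e^{wn}\to |M|$ as $w\to\infty$. From this the two cases of the claim follow immediately: for $\omega\in M$ we have $p_{\Phi(w),\Delta}(\omega)=e^{wn}/Z(w)\to 1/|M|$, and for $\omega\notin M$ we have $p_{\Phi(w),\Delta}(\omega)\le e^{w(n-\epsilon)}/Z(w)\to 0$ since $Z(w)\ge |M|\,e^{wn}$.

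The only genuine subtlety is justifying the uniform gap $\epsilon$; once it is in place the argument is standard Boltzmann-style asymptotics on a finite state space. The rest is routine manipulation of the partition function, and there is no need to treat hard constraints separately because all weights in $\Phi(w)$ are finite.
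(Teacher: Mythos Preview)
Your proof is correct and follows essentially the same approach as the paper: both establish a positive gap $\epsilon$ between the maximal value $n$ of $S(\omega)$ (attained exactly on the models of $\widehat\Phi$, via Remark~\ref{rem:1}) and the values on non-models, and then use standard Boltzmann-type asymptotics. Your version is in fact more explicit---you split the partition function and pass to the limit directly, whereas the paper argues via the probability ratio $p(\omega')/p(\omega'')\to\infty$; also note that the rationality argument for the gap, while correct, is not needed, since finiteness of $\Omega(\sigma,\Delta)$ alone already yields $\epsilon>0$.
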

\begin{proof}
Let  $\tau(\omega) = \sum_{(\alpha_i,w) \in \Phi} w \cdot Q_\omega(\alpha_i)$. We have, for all worlds $\omega', \omega'' \in \Omega(\sigma, \Delta)$: if $Q_{\omega'}(\alpha) = 1$ for all $(\alpha,w) \in \Phi$ and $Q_{\omega''}(\beta) < 1$ for some $(\beta,w) \in \Phi$, then there is a positive real number $\epsilon$ such that 
$$\frac{\tau(\omega'){-}\tau(\omega'')}{w} = \sum_{(\alpha_i,w) \in \Phi} Q_{\omega'}(\alpha_i) -  \sum_{(\alpha_i,w) \in \Phi} Q_{\omega''}(\alpha_i) \geq \epsilon$$
It is easy to notice that $\tau(\omega')-\tau(\omega'') \rightarrow \infty$ for $w \rightarrow \infty$. Hence also 
$$\frac{p_{\Phi(w),\Delta}(\omega')}{p_{\Phi(w),\Delta}(\omega'')} \rightarrow \infty \mbox{, for } w \rightarrow \infty$$ Since, by Remark \ref{rem:1} above, for all $\alpha$, $Q_\omega(\alpha) = 1 \mbox{ iff } \omega \models \alpha'$ where $\alpha'$ is obtained by replacing all $\foralls$ quantifiers in $\alpha$ by $\forall$, the proposition holds.
\end{proof}

\subsection{Relational Marginal Problems and QMLNs}\label{sec:random-semantics}

From the discussion in Section \ref{sec:RMPs} and Point~\eqref{eq:avg_as_exp} in Definition~\ref{def:sentence_statistics}, it follows that standard MLNs containing only quantifier-free FOL formulas  are solutions of maximum entropy relational marginal problems constrained by sentence statistics of sentences that contain \emph{only} $\foralls$ quantifiers. 
Hence, a natural question is whether the same also holds for relational marginal problems constrained by the more general sentence statistics given by Definition \ref{def:sentence_statistics} which do contain statistical quantifiers $\foralls$ as well as classical quantifiers 
$\forall$ and $\exists$. In the rest of the section we  sketch the argument showing that the answer to this question is positive.

\smallskip We use $(\alpha, q)$ to denote the constraint $Q[\alpha] = q$, where $\alpha$ is a quantified sentence and $q$ is a probability. Let $\Delta$ be a finite domain and $\mathcal{C} = \{(\alpha_1,q_1),\dots,(\alpha_k,q_k) \}$ be a set of  constraints that the sought distribution must satisfy. We require some auxiliary notation. We define the sets 
\begin{align*}
    \mathcal{C}_H &= \{\alpha \mid  (\alpha,1) \in \mathcal{C} \} \cup \{ \negation{\alpha} \mid (\alpha,0) \in \mathcal{C} \} , \\
    \mathcal{C}_S &= \{ (\alpha,w) \mid  (\alpha,w) \in \mathcal{C} \mbox{ and } 0 < w < 1\}.
\end{align*}
We further define a set $\Omega$ of worlds by taking
$$\Omega = \{\omega \in \Omega(\sigma,\Delta) \mid \bigwedge_{\alpha \in \mathcal{C}_H} Q_\omega(\alpha) = 1\}.$$
The motivation behind this is that, in any probability distribution $p(\omega)$ satisfying all constraints $\mathcal{C}$, every possible world \emph{not in $\Omega$} must have probability $0$. We have to treat this separately from the rest of the constraints.\footnote{If we just plugged the constraints from $\mathcal{C}_H$ into the optimization problem, Slater's condition \cite{boyd2004convex} would not hold and we could not establish the duality based on it. \smallskip}

In order to solve the relational marginal problem, we introduce a variable $P_\omega$ for every world $\omega \in \Omega$, intuitively representing the probability of $\omega$.
%
%
%
%
%
The optimization problem representing the maximum entropy relational marginal problem is then given by the objective function
\begin{equation*}
\max_{\{ P_\omega \mid \omega \in \Omega \}}  \sum_{\omega \in \Omega} P_{\omega} \log{\frac{1}{P_\omega}}    
\end{equation*}
subject to the following constraints:
\begin{align*}
&\forall (\alpha_i,q_i) \in \mathcal{C}_S : 
\sum_{\omega \in \Omega} Q_\omega(\alpha_i) \cdot P_\omega = q_i \\
&\forall \omega \in \Omega : P_{\omega} \geq 0, \quad \sum_{\omega \in \Omega} P_{\omega} = 1
\end{align*}

\noindent Assuming\footnote{If this condition is not satisfied we have to add additional hard formulas that explicitly rule out the worlds that have zero probability in every solution satisfying the given marginal constraints.} that there exists a feasible solution of the optimization problem such that $P_\omega > 0$ for all $\omega \in \Omega$, and using standard techniques from convex optimization \cite{boyd2004convex,optimization_entropy_counting}, specifically the construction of Lagrangian dual problems and the use of Slater's condition, we arrive at the solution
\begin{equation}\label{eq:qrmp}
    P_\omega = \frac{1}{Z} \exp{\left(\sum_{(\alpha_i,w_i) \in \mathcal{C}_S} w_i \cdot Q_\omega(\alpha_i) \right)}
\end{equation}
where $Z$ is a normalization constant and the weights $w_i$ are obtained as 
solutions of the optimization problem (the dual of the maximum entropy relational marginal problem) which is to maximize the following expression:
$$
\sum_{(\alpha_i,q_i) \in \mathcal{C}_S} w_i q_i - \log\left({\sum_{\omega \in \Omega} \exp\left({\sum_{(\alpha_i,q_i) \in \mathcal{C}_S} \! \! w_i \cdot  Q_\omega(\alpha_i)}\right)}\right)\label{eq:dual}
$$
As the main result of this section, we have verified that the maximum entropy relational entropy problem for QMLNs has a similar significance as in classical MLNs. In particular, both Points~(i) and~(ii) from Section~\ref{sec:RMPs} are satisfied as well for QMLNs, that is, QMLNs are also solutions of max entropy relational problems constrained by sentence statistics and the result of the latter agrees with maximum likelihood estimation when the input probabilities stem from a domain of the same size.



\section{A Translation for MAP-Inference}\label{sec:tranMAP}

In this section we describe a translation from \emph{arbitrary} quantified MLNs to quantified MLNs that contain the statistical quantifiers $\foralls$ only as a leading prefix. We have already seen in Proposition~\ref{prop:general} that the latter QMLNs correspond essentially to standard MLNs. Since the translation can be performed in polynomial time, the given translation establishes a polynomial time reduction of MAP in QMLNs to MAP in MLNs.

\paragraph{Overview.} The given quantified MLN is translated by processing the weighted sentences one by one. More specifically, we show how to eliminate a single classical quantifier that appears before a block of $\foralls$'s in the quantifier prefix of the quantified sentence. By exhaustively applying this elimination, we end up with a set of weighted sentences where all $\foralls$ quantifiers appear in a prefix block of $\foralls$'s.

\medskip For the description of the elimination, let us suppose that $(\alpha,w)$ is a weighted sentence with $\alpha$ defined as follows
$$Q_1 x_1 \dots Q_k x_k \foralls x_{k+1} \dots \foralls x_{k+l} : \psi(x_1, \dots, x_{k+l})$$
where $Q_i \in \{\exists, \forall, \foralls \}$ for $1 \leq i \leq k-1$, $Q_k \in \{\exists, \forall \}$ and $\psi(x_1,\dots,x_{k+l})$ is a formula with free variables $x_1, \dots, x_{k+l}$; recall that the formula $\psi(x_1,\dots,x_{k+l})$ may also contain variables bound by quantifiers $\exists$ and $\forall$ but not by $\foralls$. The quantified sentence $(\alpha,w)$ is transformed into a set of hard constraints, that is, weighted sentences of the shape $(\varphi,\infty)$ with $\varphi$ a FOL sentence, and a single weighted sentence $(\alpha',w')$ with $w'=|\Delta|\cdot w$ and $\alpha'$ being
\begin{align*}
    Q_1 x_1 \dots Q_{k-1} x_{k-1} \foralls x_k \dots \foralls x_{k+l} : \psi'(x_1, \dots, x_{k+l}) 
\end{align*}
for some formula $\psi'$ to be defined below.
Observe that the effect of the step is to turn quantifier $Q_k$ into $\foralls$.


\paragraph{Eliminating $Q_k x_k$} In order to simplify notation in the description of the elimination step, we will abbreviate $(x_1,\ldots,x_{k-1})$ with $\mathbf{x}$ and $(x_{k+1},\ldots,x_{k+l})$ with $\mathbf{z}$, and write, e.g., $\psi(\mathbf a,a,\mathbf b)$ instead of $\psi[\mathbf x/\mathbf a,x_k/a,\mathbf z/\mathbf b]$.
We describe how to replace $Q_k x_k$ by $\foralls x_k$. By Proposition~\ref{prop:negative_weight}, we can assume without loss of generality that $Q_k$ is in fact $\exists$. 
By the semantics, the variable $x_k$ \emph{maximizes} the sentence statistics for the variables $\mathbf z$ over all possible choices of $a\in\Delta$. Our main idea is to simulate the computation of the sentence statistic in the MLN itself. For this purpose, we introduce a fresh\footnote{In general, we need to introduce fresh predicates names for every transformed $(\alpha_i,w_i)$ separately, e.g. $\textit{max}_{\alpha_i}$ etc. For brevity, we do not show this explicitly in the text.} $k$-ary predicate name $\textit{max}$, set 
$$\psi'(\mathbf x,x_k,\mathbf z) = \textit{max}(\mathbf x,x_k)\wedge\psi(\mathbf{x},x_k,\mathbf{z}),$$
and appropriately define $\textit{max}$ using hard constraints. More formally, let us denote with $\mathsf{Wit}_{\psi,\omega}(\mathbf a,a)$ the set of all assignments of $\mathbf z$ to values $\mathbf b$ such that $\psi(\mathbf a,a,\mathbf b)$ is satisfied in world $\omega$, that is,
$$\mathsf{Wit}_{\psi,\omega}(\mathbf a,a) = \{\mathbf{b} \in \Delta^l \mid \omega \models \psi(\mathbf{a},a,\mathbf{b}) \}.$$
Our goal is to enforce that, in every world $\omega$, $\textit{max}$ satisfies the following property~$(\ast)$:
\begin{itemize}[leftmargin=8mm]
    \item[$(\ast)$] for every choice $\mathbf{a}=(a_1,\ldots, a_{k-1})$ of values for $\mathbf x$, there is precisely one $a^*$ such that $\omega\models\textit{max}(\mathbf{a},a^*)$, and moreover, this 
    $a^*$ satisfies
    \begin{equation}\label{eq:inequality}
      |\mathsf{Wit}_{\psi,\omega}(\mathbf a,a)|\leq |\mathsf{Wit}_{\psi,\omega}(\mathbf a,a^*)|
    \end{equation}
for all $a\in\Delta$.
\end{itemize}
Indeed, property~$(\ast)$ formalizes the mentioned semantics for the sentence statistic for $\exists$. For enforcing it, observe that the inequality~\eqref{eq:inequality} is satisfied iff there is an injective mapping from the set on the left-hand side, $\mathsf{Wit}_{\psi,\omega}(\mathbf{a},a)$, to the set on the right-hand side, $\mathsf{Wit}_{\psi,\omega}(\mathbf{a},a^*)$.
We exploit this observation as follows. First define a collection of linear orders on domain elements, one linear order $\preceq_{\mathbf{a}}$ for each assignment of a tuple $\mathbf{a}$ of domain elements to the variables in $\mathbf{x}$. We represent the order $\preceq_\mathbf{a}$ by the predicates $\textit{leq}(\mathbf{a},\cdot,\cdot)$. The linear orders are enforced by hard constraints. More precisely, we ensure that one such linear order exists for any assignment of domain elements to variables in $\mathbf{x}$ by adding hard constraints for axiomatizing antisymmetry, transitivity, and totality, respectively:
\begin{align}
    \forall \mathbf{x} \forall y,z &\colon \textit{leq}(\mathbf{x}, y, z) \wedge \textit{leq}(\mathbf{x}, z, y) \Rightarrow y = z,\label{eq:leq1}\\
    \forall \mathbf{x} \forall x,y,z &\colon \textit{leq}(\mathbf{x}, x, y) \wedge \textit{leq}(\mathbf{x}, y, z) \Rightarrow \textit{leq}(\mathbf{x}, x, z),\label{eq:leq2}\\
    \forall \mathbf{x} \forall x, y &\colon \textit{leq}(\mathbf{x}, x, y) \vee \textit{leq}(\mathbf{x}, y, x).\label{eq:leq3}
\end{align}

\noindent Next, we connect the linear order construction with the idea of injective mappings described above. This is done via another fresh predicate name $\textit{fn}$ which encodes the required mapping. Intuitively, in $\textit{fn}(\mathbf a, a, a',\mathbf b,\mathbf b')$, $\mathbf{a}$ refers to the current assignment to $\mathbf{x}$, constants $a,a'$ refer to the elements we are interested in for $x_k$, and the function maps $\mathbf{b}\in \mathsf{Wit}_{\psi,\omega}(\mathbf{a},a)$ to $\mathbf{b'}\in \mathsf{Wit}_{\psi,\omega}(\mathbf{a},a')$. We add the following hard constraints:
\begin{align*}
\forall \mathbf{x} \forall y,y'\forall \mathbf{z} &\colon \textit{leq}(\mathbf{x}, y, y') \wedge \psi(\mathbf{x}, y,\mathbf{z}) \\
&\hspace{-.2cm}\Rightarrow \left( \exists \mathbf{z}' \colon \psi(\mathbf{x},y',\mathbf{z}') \wedge \textit{fn}(\mathbf{x},y,y',\mathbf{z},\mathbf{z}') \right), \\[1mm]
\forall \mathbf{x} \forall y,y',z,z',z'' &\colon \textit{fn}(\mathbf{x},y,y',\mathbf{z},\mathbf{z}') \wedge \textit{fn}(\mathbf{x},y,y',\mathbf{z},\mathbf{z}'') \\
&\hspace{3.5cm}
\Rightarrow \mathbf z' = \mathbf z'', \\[1mm]
 \forall \mathbf{x} \forall y,y',z,z',z'' &\colon \textit{fn}(\mathbf{x},y,y',\mathbf{z},\mathbf{z}') \wedge \textit{fn}(\mathbf{x},y,y',\mathbf{z}'',\mathbf{z}') \\
&\hspace{3.5cm}
\Rightarrow \mathbf z = \mathbf z''.
\end{align*}

\noindent The first two sentences enforce that, if $a \preceq_{\mathbf{a}} a'$, then there exists a mapping from $\mathsf{Wit}_{\psi,\omega}(\mathbf{a},a)$ to $\mathsf{Wit}_{\psi,\omega}(\mathbf{a},a')$. Injectivity of the mapping is ensured by the third sentence.

In order to define the predicate $\textit{max}$, we add the following hard constraints:
\begin{align*}
    \forall \mathbf{x} \exists y &\colon \textit{max}(\mathbf{x},y), \\
    \forall \mathbf{x}, y, y' &\colon \textit{max}(\mathbf{x},y) \wedge \textit{leq}(\mathbf{x},y,y') \Rightarrow y = y'.
\end{align*}

\paragraph{Correctness} We have given some intuition above, but let us provide some more details. First, it is not hard to see that the added hard constraints ensure that $\textit{max}$ indeed satisfies the desired property~$(\ast)$.

Now, let $\Delta$ be an arbitrary domain, $\omega^*$ be the most probable world of the QMLN $\Phi$ over domain $\Delta$, and let $\Phi'$ be obtained from $\Phi$ by the application of a single quantifier elimination step. Further, denote with $\sigma'\supseteq\sigma$ the extended vocabulary. We call a world $\omega'\in\Omega(\sigma',\Delta)$ an \emph{extension of $\omega\in\Omega(\sigma,\Delta)$} if for every $R\in \sigma$ of arity $k$ and all $a_1,\ldots,a_k\in\Delta$, we have $\omega' \models R(a_1,\ldots,a_k)$ iff $\omega\models R(a_1,\ldots,a_k)$. 
It is easy to see that our construction ensures that, in fact, every world in $\Omega(\sigma',\Delta)$ is the extension of a (unique!) world in $\Omega(\sigma,\Delta)$, and conversely, every world in $\Omega(\sigma,\Delta)$ has an extension in $\Omega(\sigma',\Delta)$. Moreover, the sentence statistics for $\alpha$ and its replacement $\alpha'$ relate as follows:
\begin{lemma}\label{lem:correct-map}
  Let $\omega'$ be an extension of $\omega$. Then $$Q_{\omega'}(\alpha')=\frac{1}{|\Delta|} Q_\omega(\alpha).$$
\end{lemma}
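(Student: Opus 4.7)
The plan is to unfold the recursive definition of $Q_{\omega'}(\alpha')$ from the inside out and show that the new statistical quantifier $\foralls x_k$, combined with the auxiliary predicate $\textit{max}$, exactly simulates the maximum that $\exists x_k$ computes in $\alpha$, modulo a uniform factor $1/|\Delta|$. First I would observe that the formula $\psi$ contains none of the fresh predicate names $\textit{max}$, $\textit{leq}$, $\textit{fn}$, so for all tuples $\mathbf a, a, \mathbf b$ we have $\omega'\models\psi(\mathbf a,a,\mathbf b)$ iff $\omega\models\psi(\mathbf a,a,\mathbf b)$, hence $\mathsf{Wit}_{\psi,\omega'}(\mathbf a,a)=\mathsf{Wit}_{\psi,\omega}(\mathbf a,a)$.

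Next, I would compute the contribution of the innermost block $\foralls x_{k+1}\ldots\foralls x_{k+l}$ of $\alpha'$ for fixed outer values $\mathbf a$ and $a$. Unfolding~\eqref{eq:avg} $l$ times and using that $\textit{max}(\mathbf a,a)$ does not depend on $\mathbf z$, this inner statistic evaluates to
\begin{equation*}
\mathbbm{1}\bigl(\omega'\models\textit{max}(\mathbf a,a)\bigr)\cdot \frac{|\mathsf{Wit}_{\psi,\omega}(\mathbf a,a)|}{|\Delta|^l}.
\end{equation*}
Averaging over $a\in\Delta$ to account for the newly inserted $\foralls x_k$ and invoking property~$(\ast)$, which guarantees that there is a unique $a^*$ with $\omega'\models\textit{max}(\mathbf a,a^*)$ and that this $a^*$ maximises $|\mathsf{Wit}_{\psi,\omega}(\mathbf a,\cdot)|$, yields
\begin{equation*}
\frac{1}{|\Delta|}\cdot\frac{|\mathsf{Wit}_{\psi,\omega}(\mathbf a,a^*)|}{|\Delta|^l}.
\end{equation*}
On the $\alpha$-side, the same inner block produces $|\mathsf{Wit}_{\psi,\omega}(\mathbf a,a)|/|\Delta|^l$, and the quantifier $\exists x_k$ then takes the maximum over $a$, giving $|\mathsf{Wit}_{\psi,\omega}(\mathbf a,a^*)|/|\Delta|^l$ by the second half of~$(\ast)$. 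Hence at this level the two statistics coincide up to the factor $1/|\Delta|$.

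Finally, I would propagate this factor through the outer prefix $Q_1 x_1\ldots Q_{k-1} x_{k-1}$, which is identical in $\alpha$ and $\alpha'$. Each of the three operations used in Definition~\ref{def:sentence_statistics} to interpret these quantifiers, namely $\min$, $\max$, and uniform averaging, commutes with multiplication by the positive constant $1/|\Delta|$. A straightforward induction on $k-1$ outer quantifiers therefore lifts the pointwise identity to the whole sentence, yielding $Q_{\omega'}(\alpha')=\frac{1}{|\Delta|}Q_\omega(\alpha)$.

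The main obstacle is really the correctness of property~$(\ast)$, i.e.\ that the conjunction of the hard constraints about $\textit{leq}$, $\textit{fn}$ and $\textit{max}$ forces $\textit{max}(\mathbf a,\cdot)$ to single out an argmax of $|\mathsf{Wit}_{\psi,\omega}(\mathbf a,\cdot)|$; but this has already been argued preceding the lemma statement, so within the proof itself the work reduces to the unfolding above together with the routine commutation argument for the outer quantifiers.
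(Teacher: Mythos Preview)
Your proposal is correct and follows essentially the same approach as the paper's proof: fix an assignment $\mathbf a$ to the outer variables, use property~$(\ast)$ to see that the $\foralls x_k$-average collapses to $\frac{1}{|\Delta|}$ times the value at the unique $a^*$, identify that value with the $\exists x_k$-maximum on the $\alpha$-side, and then push the constant $1/|\Delta|$ through the remaining outer quantifiers via commutation with $\min$, $\max$, and averaging. Your version is in fact slightly more explicit than the paper's, in that you spell out why $\omega'$ and $\omega$ agree on $\psi$ and write the inner statistic concretely as $|\mathsf{Wit}_{\psi,\omega}(\mathbf a,a)|/|\Delta|^l$.
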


\begin{proof}
To see this, let $\mathbf{a}$ be some assignment to $\mathbf{x}$ and let $a^*\in\Delta$ be the element that exists by Property~$(\ast)$. We have $Q_{\omega'}(\foralls \mathbf z\colon\psi'(\mathbf a, a^*,\mathbf z))=Q_{\omega}(\foralls \mathbf z\colon\psi(\mathbf a, a^*,\mathbf z))$ and, for all $a\neq a^*$, $Q_{\omega'}(\foralls \mathbf z\colon\psi'(\mathbf a, a,\mathbf z))=0$ since $\omega'\not\models\textit{max}(\mathbf a,a)$ for such $a$. By the semantics, we obtain $$Q_{\omega'}(\foralls x_k\mathbf z\colon\psi'(\mathbf a, x_k,\mathbf z))= \frac{1}{|\Delta|}Q_{\omega}(\exists x_k\foralls \mathbf z\colon\psi(\mathbf a, x_k,\mathbf z))$$ The statement from the Lemma follows from the fact that the constant factor $1/{|\Delta|}$ distributes over $\min,\max$ and expectation in the definition of the semantics of $\forall,\exists$, and $\foralls$.
\end{proof}
The definition of the updated weight $w'=|\Delta|\cdot w$ now implies that there is a constant $c$ such that $p_{\Phi',\Delta}(\omega')=c\cdot p_{\Phi,\Delta}(\omega)$ for all $\omega\in\Omega(\sigma,\Delta)$ and all extensions 
$\omega'\in\Omega(\sigma',\Delta)$ thereof. This establishes the correctness of the  reduction. 
\begin{theorem}
  If $\omega'$ is an extension of $\omega$, then $\omega'$ is a most probable world in $p_{\Phi',\Delta}$ iff $\omega$ is a most probable world in $p_{\Phi,\Delta}$.
\end{theorem}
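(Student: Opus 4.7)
The plan is to establish a tight proportionality between $p_{\Phi,\Delta}$ and $p_{\Phi',\Delta}$ along the extension relation, from which preservation of the argmax is immediate. Concretely, I would show that there is a single constant $c$ (independent of $\omega$ and the chosen extension) with $p_{\Phi',\Delta}(\omega') = c \cdot p_{\Phi,\Delta}(\omega)$ for every $\omega \in \Omega(\sigma,\Delta)$ and every valid extension $\omega' \in \Omega(\sigma',\Delta)$ thereof.

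The main step is a term-by-term comparison of the two exponents in the definition of $p_{\Phi,\Delta}$ and $p_{\Phi',\Delta}$. Fix any valid extension $\omega'$ of $\omega$. For the replaced weighted sentence, Lemma~\ref{lem:correct-map} gives $Q_{\omega'}(\alpha') = \frac{1}{|\Delta|}Q_\omega(\alpha)$, so the choice $w' = |\Delta|\cdot w$ yields $w'\cdot Q_{\omega'}(\alpha') = w\cdot Q_\omega(\alpha)$. For every other soft sentence $(\alpha_i,w_i)\in \Phi$, the formula $\alpha_i$ mentions only predicates in the original vocabulary $\sigma$, and $\omega'$ agrees with $\omega$ on $\sigma$ by definition of extension; a straightforward induction on the quantifier prefix of $\alpha_i$ then yields $Q_{\omega'}(\alpha_i)=Q_\omega(\alpha_i)$. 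The additional hard constraints introduced by the elimination step are satisfied exactly because $\omega'$ is a valid extension. Consequently the two soft exponents coincide, and the ratio $c = Z/Z'$ depends only on the normalization constants.

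The subtle point, which I expect to be the only real obstacle, is that a given $\omega$ may admit \emph{several} valid extensions, corresponding to different admissible choices of the auxiliary predicates $\textit{max}$, $\textit{leq}$ and $\textit{fn}$. I would handle this by noting that among these predicates only $\textit{max}$ enters the soft part (via $\psi'$), and Lemma~\ref{lem:correct-map} already shows that $Q_{\omega'}(\alpha')$ depends only on $\omega$, not on which witness of property~$(\ast)$ is used. Thus $p_{\Phi',\Delta}$ is constant along the fiber over each $\omega$, and $c$ is genuinely universal.

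Given this proportionality the theorem is straightforward. If $\omega$ is a most probable world for $p_{\Phi,\Delta}$, then for any other $\omega^{\circ}\in\Omega(\sigma,\Delta)$ and any extensions $\omega',\omega^{\circ'}$ we have $p_{\Phi',\Delta}(\omega') = c\cdot p_{\Phi,\Delta}(\omega) \geq c\cdot p_{\Phi,\Delta}(\omega^{\circ}) = p_{\Phi',\Delta}(\omega^{\circ'})$, and since every world in $\Omega(\sigma',\Delta)$ arises as some such extension, $\omega'$ is a most probable world of $p_{\Phi',\Delta}$. The converse is symmetric: given an argmax $\omega'$ with $\omega = \omega'|_\sigma$, for any $\omega^{\circ}$ pick any extension $\omega^{\circ'}$ (which exists by the earlier observation) and the chain of inequalities reverses to give $p_{\Phi,\Delta}(\omega)\geq p_{\Phi,\Delta}(\omega^{\circ})$.
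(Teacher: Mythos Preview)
Your proposal is correct and follows essentially the same line as the paper: use Lemma~\ref{lem:correct-map} together with $w'=|\Delta|\cdot w$ to show that the soft exponents coincide, deduce a uniform proportionality $p_{\Phi',\Delta}(\omega')=c\cdot p_{\Phi,\Delta}(\omega)$ with $c=Z/Z'$, and conclude that most probable worlds correspond. You are in fact more careful than the paper in two places---explicitly noting that the remaining soft sentences depend only on $\sigma$, and observing that the possibly many valid extensions of a single $\omega$ all receive the same probability---but these are elaborations of the same argument rather than a different route.
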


\section{A Translation for Marginal Inference}\label{sec:tranMARG}

The translation given in the previous section does not quite work for marginal inference. 
Note that for MAP inference, it is enough if all extensions of any possible world of the original problem have the same weight. It is not a problem if there are multiple extensions of the same world or if different worlds have different numbers of extensions (as long as they have the same weight). A single world sometimes has multiple extensions because the linear order $\textit{leq}$ and the functions represented by $\textit{fn}$ are not uniquely defined. However, for marginal inference, this is no longer acceptable. We have to ensure that any two worlds will have the same number of extensions. 


\smallskip We fix these problems by further restricting the functions encoded by $\textit{fn}$ and the order encoded by $\textit{leq}$. More specifically, our goal is to add another set of hard constraints such that
\begin{itemize}[leftmargin=8mm]

    \item[$(\ast\ast)$] every world $\omega\in \Omega(\sigma,\Delta)$ has the same number of extensions $\omega'\in\Omega(\sigma',\Delta)$. 
\end{itemize}
To realize that, we exploit again the idea of the linear order. More specifically, we 
add a fresh binary predicate name $\leq$ and enforce that it is a linear order on $\Delta$ by using hard constraints such as those in Equations~\eqref{eq:leq1}--\eqref{eq:leq3}. Based on $\leq$, we break all possible ties that might occur in the definition of $\textit{max},\textit{leq},\textit{fn}$, in the sense that for a fixed choice of $\leq$, there is exactly one choice of $\textit{max},\textit{leq},\textit{fn}$. First, we enforce that $\textit{fn}$ has the right domain: 
\begin{align*}
\forall\mathbf x\forall x,y\forall \mathbf z,\mathbf z' &\colon \textit{fn}(\mathbf x,x,y,\mathbf z,\mathbf z') 
\\ &\hspace{-.2cm} 
\Rightarrow (\psi(\mathbf{x},x,\mathbf z) \wedge \psi(\mathbf{x},y,\mathbf z'))
\end{align*}
For breaking ties in $\textit{leq}$, we add the following constraint stating that, if $\textit{leq}(\mathbf{x},x,y)$ holds and the function encoded by $\textit{fn}$ is also surjective at given points $\mathbf x,x,y$, then $x\leq y$: 
\begin{align*}
    \forall \mathbf x\forall x,y \colon& \Big(\textit{leq}(\mathbf x,x,y) \wedge{} \\
    & \big(\forall \mathbf z'. \psi(\mathbf x,y,\mathbf z') \Rightarrow \exists \mathbf z. \textit{fn}(\mathbf x,x,y,\mathbf z,\mathbf z')\big)\Big) \\
    & \Rightarrow x\leq y
\end{align*}
Next we enforce that $\textit{fn}$ preserves the order $\leq$ by including the constraint
\begin{align*}
    &\forall \mathbf x\forall x,y \forall \mathbf z_1,\mathbf z_1',\mathbf z_2,\mathbf z_2'\colon \\
    &\quad \Big( \mathbf z_1\leq^* \mathbf z_2\wedge \textit{fn}(\mathbf x,x,y,\mathbf z_1,\mathbf z_1')\wedge \textit{fn}(\mathbf x,x,y,\mathbf z_2,\mathbf z_2')\Big)\\ &\hspace{5.5cm} \Rightarrow \mathbf z_1'\leq^* \mathbf z_2'
\end{align*}
where the order $\leq^*$ is defined -- using straightforward constraints -- as the (unique) lexicographic extension of $\leq$ to the arity of $\mathbf{z}$.
For instance,  consider the sets $\{1,2\}$ and $\{1,2,3\}$. Assuming $1 \leq 2 \leq 3$, this constraint excludes, among others, the function that maps $1$ to $3$ and $2$ to $1$, because $\leq$ is not preserved: $1\leq 2$, but $3\not\leq 1$.

\smallskip Finally, note that  it can still be the case that two worlds have different number of extensions because each of the functions represented by $\textit{fn}$ is order-preserving w.r.t. $\leq^*$ and has a uniquely defined domain but, apart form that, does not have to satisfy any other constraints. For instance the number of such functions from $\{1,2,3 \}$ to $\{1,2,3,4 \}$ and the number of such functions from $\{1,2,3 \}$ to $\{ 1,2,3,4,5 \}$ are different. We address this by requiring that the functions represented by $\textit{fn}$ map every element to the smallest element possible:
\begin{align*}
    & \forall \mathbf{x}\forall x,y \forall \mathbf{z}_1, \mathbf{z}_1' \colon \textit{fn}(\mathbf{x},x,y,\mathbf{z}_1,\mathbf{z}_1') \\ & \Rightarrow \left( \forall \mathbf{z}_2'. \psi(\mathbf{x},y,\mathbf{z}_2') \wedge \mathbf{z}_2' \leq^* \mathbf{z}_1' \Rightarrow \exists \mathbf{z}_2 . \textit{fn}(\mathbf{x},x,y,\mathbf{z}_2,\mathbf{z}_2')\right)
\end{align*}

\paragraph{Correctness} Let $\Phi''$ be the result of adding the described constraints to $\Phi'$. Based on the given intuitions, one can easily show that Property~$(\ast\ast)$ is satisfied. Since there are $|\Delta|!$ possible choices for $\leq$, we get that $$p_{\Phi'',\Delta}(\omega')=\frac{1}{|\Delta|!}p_{\Phi,\Delta}(\omega),$$
for any extension $\omega'$ of $\omega$.
Since $\omega'\models \varphi$ iff $\omega\models\varphi$ for any given sentence $\varphi$ over $\sigma$, we obtain the desired result:
\begin{theorem}\label{thm:correct-marg} For every FOL sentence $\varphi$, we have $$P_{\omega \sim p_{\Phi,\Delta}}(\varphi) = P_{\omega \sim p_{\Phi'',\Delta}}(\varphi).$$
\end{theorem}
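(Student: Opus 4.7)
The plan is to reduce the theorem to two sub-claims: (a) the number of extensions of each $\omega \in \Omega(\sigma,\Delta)$ in $\Omega(\sigma',\Delta)$ that satisfy the hard constraints of $\Phi''$ is exactly $|\Delta|!$, and (b) each such extension $\omega'$ satisfies $p_{\Phi'',\Delta}(\omega')=\tfrac{1}{|\Delta|!}\,p_{\Phi,\Delta}(\omega)$. Once these are in place, the theorem follows by a partition-and-sum argument using the fact that a FOL sentence $\varphi$ over $\sigma$ is insensitive to the new predicates.

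The core work lies in~(a), i.e.\ verifying Property~$(\ast\ast)$. I would fix an arbitrary $\omega\in\Omega(\sigma,\Delta)$ and argue that the $|\Delta|!$ linear orders $\leq$ on $\Delta$ are in bijection with the extensions of $\omega$ in $\Phi''$. Given $\omega$ and $\leq$, I would first show that $\leq^*$ is uniquely determined as the lexicographic lifting. Next, for each tuple $\mathbf{a}$ assigned to $\mathbf{x}$, the relation $\textit{leq}(\mathbf{a},\cdot,\cdot)$ is forced to be a linear order refining the partial order ``$x \preceq_{\mathbf{a}} y$ iff $|\mathsf{Wit}_{\psi,\omega}(\mathbf{a},x)| \leq |\mathsf{Wit}_{\psi,\omega}(\mathbf{a},y)|$''; the new tie-breaking clause, which fires precisely when the injection in $\textit{fn}$ is also surjective (equivalently, when the two witness-set sizes agree), forces ties to be broken according to $\leq$, so $\textit{leq}$ is unique. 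Then, for every triple $(\mathbf{a},x,y)$ with $x \preceq_{\mathbf{a}} y$, the three clauses on $\textit{fn}$ (correct domain, $\leq^*$-preservation, and minimum-image) jointly determine a unique order-preserving injection from $\mathsf{Wit}_{\psi,\omega}(\mathbf{a},x)$ to $\mathsf{Wit}_{\psi,\omega}(\mathbf{a},y)$: iterate through the domain in $\leq^*$-order and map each element to the smallest as-yet-unused $\leq^*$-successor in the codomain, which exists by the MAP-phase injection axioms. Finally, $\textit{max}(\mathbf{a},\cdot)$ is forced to pick the $\preceq_{\mathbf{a}}$-maximum, which is unique. The main obstacle is this last uniqueness chain, in particular verifying that the ``smallest-image'' clause is strong enough to eliminate all remaining degrees of freedom in $\textit{fn}$ while remaining satisfiable; here the extendibility of any partial order-preserving injection from an initial segment to a full injection, guaranteed by the hard constraints inherited from $\Phi'$, is what makes the greedy construction go through.

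For sub-claim~(b), Lemma~\ref{lem:correct-map} (applied iteratively, once for each quantifier elimination step) gives $Q_{\omega'}(\alpha'_i)=\tfrac{1}{|\Delta|}\,Q_{\omega}(\alpha_i)$ for every soft constraint, which combined with the weight rescaling $w'_i=|\Delta|\cdot w_i$ yields equality of the soft-constraint exponents at $\omega$ under $\Phi$ and at $\omega'$ under $\Phi''$; the added hard constraints contribute nothing to this exponent. Summing over extensions, $(\ast\ast)$ gives $Z''=|\Delta|!\cdot Z$, hence $p_{\Phi'',\Delta}(\omega')=\tfrac{1}{|\Delta|!}\,p_{\Phi,\Delta}(\omega)$. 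To conclude, since $\varphi$ mentions only predicates of $\sigma$, we have $\omega'\models\varphi\Leftrightarrow\omega\models\varphi$ for every extension $\omega'$ of $\omega$, and so
\begin{align*}
P_{\omega'\sim p_{\Phi'',\Delta}}(\varphi)
&=\sum_{\omega\models\varphi}\;\sum_{\omega'\text{ ext.\ of }\omega} p_{\Phi'',\Delta}(\omega') \\
&=\sum_{\omega\models\varphi} |\Delta|!\cdot\tfrac{1}{|\Delta|!}\,p_{\Phi,\Delta}(\omega)
= P_{\omega\sim p_{\Phi,\Delta}}(\varphi),
\end{align*}
which is the desired identity.
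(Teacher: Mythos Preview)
Your proposal is correct and follows essentially the same route as the paper: establish Property~$(\ast\ast)$ (each $\omega$ has exactly $|\Delta|!$ extensions, one per choice of $\leq$), combine with Lemma~\ref{lem:correct-map} and the weight rescaling to get $p_{\Phi'',\Delta}(\omega')=\tfrac{1}{|\Delta|!}p_{\Phi,\Delta}(\omega)$, and conclude by the $\sigma$-insensitivity of $\varphi$. The paper leaves the verification of $(\ast\ast)$ at the level of ``one can easily show''; your uniqueness chain for $\leq^*$, $\textit{leq}$, $\textit{fn}$, and $\textit{max}$ is exactly the detail that sketch suppresses, and your partition-and-sum computation at the end makes explicit what the paper states in one line.
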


\medskip
\noindent An important question is whether the result from Theorem~\ref{thm:correct-marg} can be extended
to computing a marginal query $Q_{\Phi,\Delta}[\alpha]$ for a quantified sentence $\alpha$. The answer to this question is positive. We next outline how this is done. 
Let $\Phi$ be the MLN resulting from applying the translation above  to a given QMLN. We distinguish two cases. 

\smallskip \noindent $(i)$ If $\alpha$ contains only $\foralls$ quantifiers  as a leading prefix of the block of quantifiers, we can borrow techniques from \cite{vanhaaren.mlj}. In particular, we need to create the partition of the groundings $\alpha\vartheta$ of $\alpha$; specifically, groundings of the  variables bound by  $\foralls$, such that the probability of any two groundings $\alpha\vartheta$ and $\alpha\vartheta'$ in the same equivalence class of this partition is equal, i.e. $P_{\omega \sim p_{\Phi,\Delta}}(\alpha\vartheta) = P_{\omega \sim p_{\Phi'',\Delta}}(\alpha\vartheta')$. This partitioning can be achieved using preemptive shattering~\cite{poole2011towards}. Once we have the partitioning, we can compute the marginal probability $P_{\omega \sim p_{\Phi,\Delta}}(\alpha\vartheta)$ for one representative $\alpha\vartheta_i$ from each partition class $\mathcal{P}_i$. Finally, we can compute $Q_{\Phi,\Delta}[\alpha]$ as follows:
$$Q_{\Phi,\Delta}[\alpha] = \frac{1}{|\bigcup_{\mathcal{P}_i \in \mathcal{P}}\mathcal{P}_i|} \sum_{\mathcal{P} \in \mathcal{P}_i} P_{\omega \sim p_{\Phi,\Delta}}(\alpha\vartheta_i) \cdot |\mathcal{P}_i|$$

\noindent $(ii)$ If $\alpha$ contains an arbitrary quantifier prefix, we first need to convert it to the form assumed above, that is,   we apply the transformation described in this section to obtain a sentence $\alpha'$ that contains $\foralls$ quantifiers only as a prefix while also generating several additional hard constraints, which we include in  $\Phi$. We can then use the procedure outlined in $(i)$ above to compute $Q_{\Phi,\Delta}[\alpha]$.






\section{On QMLNs Restricted to Two Variables}\label{sec:TwoVar}

It has been shown that marginal inference for MLNs can often be reduced to \emph{symmetric weighted first-order model counting (WFOMC)}, see e.g.\ \cite{BeameBGS15}. In this context of particular importance is the \emph{two-variable fragment of   FOL (FO$^2$)}, since for FO$^2$ symmetric WFOMC can be solved in polynomial time data complexity, that is, when  the formula is considered fixed and the only input is the domain $\Delta$~\cite{Broeck11,BroeckMD14,BeameBGS15}. Given the fact that
MLNs containing formulae with up two variables ($2$-MLNs) can be encoded as WFOMC for FO$^2$, $2$-MLNs are \emph{domain liftable}.
Hence, a natural question to ask is whether the same holds for  \emph{quantified} MLNs. 

Let us first remark that the reduction described in  Section~\ref{sec:tranMARG} does not preserve the quantifier rank. Indeed, consider the example formula $\exists x\foralls y\colon\mathsf{knows}(x,y)$. It is originally an FO$^2$ formula, but the elimination step introduces a \emph{quaternary} predicate name $\textit{fn}$ and the constraints for this predicate require using \emph{four} variables. As a consequence, we cannot `reuse' the  results on WFOMC for FO$^2$ and thus attain  domain liftability  for $2$-QMLNs. Moreover, our reduction explicitly introduces transitivity axioms while there are only a few known very restricted cases where WFOMC is domain liftable in the presence of transitivity \cite{kazemi.nips.16}.

In the light of   recent results by~\citeauthor{KuLuLICS18}~(\citeyear{KuLuLICS18}) on WFOMC for an extension of FO$^2$ with \emph{counting quantifiers}, it is not very surprising that a straightforward translation \emph{preserving} the quantifier rank from QMLNs to MLNs seems not possible.  To see this, note that   to compute the sentence statistic, we need to take into account the \emph{out-degree} of  domain elements, where the out-degree of an element is the  number of  elements that are related to it, cf.\ Example~\ref{example:first}. 
Indeed, \cite{KuLuLICS18} put quite some technical effort, relying on sophisticated model-theoretic techniques and combinatorics, 
to show that WFOMC for FO$^2$ with \emph{one functional axiom}\footnote{Intuitively, enforcing out-degree 1 for a particular binary relation, that is, elements are related with at most one element.} is polynomial time in data complexity. The complexity of WFOMC for  FO$^2$ with many functional axioms or more generally, with arbitrary counting quantifiers remains a challenging open problem. This gives an insight on the difficulty   of studying the computational complexity of reasoning in QMLNs restricted to two variables, making it a research challenge by itself, which we plan to address in the future. In particular, we will investigate  the exact connection between 2-QMLNs  and WFOMC for extensions of FO$^2$ with some means for counting.

\section{Related Work}\label{sec:relwork}

Classical FOL  quantifiers $(\forall, \exists)$ were already considered in the original work on Markov Logic Networks~\cite{RichardsonD06}, albeit without a rigorous definition. A precise treatment of  FOL quantifiers was carried out later on in~\cite{BroeckMD14,VdBFTDB17}. In particular,  \citeauthor{BroeckMD14}~(\citeyear{BroeckMD14}) show how to remove existential quantifiers while preserving marginal inference results. In all these works, however,  MLNs with quantifiers were defined in a way that is equivalent to QMLNs with a prefix of $\foralls$ quantifiers. As a consequence,  it is not possible to directly represent statistics ({\em features} in MLN parlance) that correspond to sentences in which $\forall$ or $\exists$ precedes $\foralls$ in the quantifier block. 

There has been also some work on other types of aggregation. For example, some works considered explicit constructs for counting in relational models~\cite{MilchZKHK08,JainBB10}.  
In another research effort,  \citeauthor{LowdD07}~(\citeyear{LowdD07}) introduced recursive random fields  that are capable of emulating certain forms of more complex aggregation. However, recursive random fields do not seem capable of even representing statistics such as $\exists x \foralls y : \mathsf{knows}(x,y)$. Finally,  \citeauthor{BeltagyE15} (\citeyear{BeltagyE15}) studied the effect of the domain closure assumption on the semantics of probabilistic logic when existential quantifiers are allowed.

In the context of \emph{probabilistic soft logic (PSL)}, \citeauthor{FarnadiBMGC17} (\citeyear{FarnadiBMGC17}) recently introduced soft quantifiers based on quantifiers from \emph{fuzzy logic}. However, their approach strictly applies to fuzzy logic. In particular,  in PSL random variables e.g. $\mathsf{smokes}(\mathsf{Alice})$, 
may acquire non-Boolean truth values. 

Another stream of research that is related to our work is the study of the effect of domain size and its extrapolation on the probability distributions encoded using various relational learning systems \cite{PooleBKKN14,KazemiBKNP14,kuzelka.aaai.2018}. However, none of these works studied the interplay of statistical and classical quantifiers.

There has been also some related work within KR research, e.g.\ about statistical reasoning in description logics~\cite{penaloza2017towards,LutSchroe-KR10}.



\section{Discussion and Future Work}\label{sec:con}

In this paper, we have investigated  the  extension QMLNs of MLNs with statistical quantifiers, allowing to express e.g.\ measures on the proportion of domain elements fulfilling certain property. We developed some key  foundations by establishing a relation between MLNs and QMLNs. In particular, we provided a polytime reduction of the standard reasoning tasks MAP and MARG in QMLNs  to their counterpart in MLNs. Furthermore, we also showed how to generalize the random substitution semantics  to QMLNs.

As for future work, it might be interesting to develop more direct approaches to MAP and MARG in QMLNs. Indeed, even if the developed translations provide polytime reductions of reasoning in QMLNs to reasoning in MLNs (and overall,
a good understanding of the relation between QMLNs and MLNs), they do not yield an immediate  practical approach since  the introduction of new predicates with greater arity is required. 
Another interesting aspect of future work is to investigate the statistical properties of QMLNs. For MLNs with quantifier-free formulas, \citeauthor{kuzelka.aaai.2018} (\citeyear{kuzelka.aaai.2018}) derived bounds on expected errors of the statistics' estimates. However, obtaining similar bounds for the more general statistics considered here seems considerably more difficult because of the minimization and maximization that are involved in them.

\section*{Acknowledgments}

The authors were  supported by  EU's Horizon 2020 programme under the Marie Sk{\l}odowska-Curie grant 663830,  ERC consolidator
grant 647289 CODA, and the Research Foundation - Flanders (project G.0428.15), respectively.

\bibliographystyle{aaai}
\bibliography{main}
\clearpage

\end{document}